\def\BibTeX{{\rm B\kern-.05em{\sc i\kern-.025em b}\kern-.08em
    T\kern-.1667em\lower.7ex\hbox{E}\kern-.125emX}}
 \DeclareFontFamily{OT1}{pzc}{}
\DeclareFontShape{OT1}{pzc}{m}{it}{<-> s * [1.10] pzcmi7t}{}
\DeclareMathAlphabet{\mathpzc}{OT1}{pzc}{m}{it}
\newcommand{\grad}[1]{\nabla_{#1^{\mathsf{T}}} }
\newcommand{\w}{\bm{w}}
\newcommand{\we}{\widetilde{{\w}}}
\newcommand{\eqdef}{\:\overset{\Delta}{=}\:}
\DeclareMathOperator*{\argmin}{argmin}
\newcommand{\Li}[1]{\mathcal{\bm{L}}_{#1,i}}
\newcommand{\tran}{{\sf T}}
\definecolor{Gray}{gray}{0.8}
\definecolor{LightCyan}{rgb}{0.88,1,1}
\newtheorem{theorem}{Theorem}
\newtheorem{assumption}{Assumption}
\newtheorem{lemma}{Lemma}
\newtheorem{definition}{Definition}
\begin{document}

\title{A Graph Federated Architecture with Privacy Preserving Learning}


\author{\IEEEauthorblockN{Elsa Rizk  and Ali H. Sayed\thanks{School of Engineering, École Polytechnique Fédérale de Lausanne, 1015 Lausanne, Switzerland (e-mail:\{elsa.rizk, ali.sayed\}@epfl.ch).}}}

\maketitle

\begin{abstract}
Federated learning involves a central processor that works with multiple agents to find a global model. The process consists of repeatedly exchanging estimates, which results in the diffusion of information pertaining to the local private data. Such a scheme can be inconvenient when dealing with sensitive data, and therefore, there is a need for the privatization of the algorithms. Furthermore, the current architecture of a server connected to multiple clients is highly sensitive to communication failures and computational overloads at the server. Thus in this work, we develop a private multi-server federated learning scheme, which we call graph federated learning. We use cryptographic and differential privacy concepts to privatize the federated learning algorithm that we extend to the graph structure. We study the effect of privatization on the performance of the learning algorithm for general private schemes that can be modeled as additive noise. We show under convexity and Lipschitz conditions, that the privatized process matches the performance of the non-private algorithm, even when we increase the noise variance.  
\end{abstract}

\begin{IEEEkeywords}
federated learning, distributed learning,  differential privacy, secure aggregation, network
\end{IEEEkeywords}

\section{Introduction}
Federated learning (FL) \cite{mcmahan16} has emerged in  recent years as a powerful distributed learning algorithm that aims at finding a global model that fits local data. The FL algorithm consists of two steps: an update step done locally at each client, and an aggregation step done at the server. During these two steps, communication occurs between the clients and the \textit{one} server. Unfortunately, such a structure is not robust, since it relies on one server to carry out all the communications and aggregations. One solution was suggested in \cite{HFL} introducing hierarchical federated learning; the architecture consists of one cloud server connected to a number of edge servers that, in turn, are connected to multiple clients, thus forming a tree structure. However, in this work, we consider a more general framework. We introduce what we call \textit{graph federated learning} (GFL) that consists of several servers each connected to their own subset of clients. The connection between the servers is represented by a graph. Such an architecture is more realistic for instance when considering cellular networks that consist of multiple cellphone towers, each open for communication with numerous cellular devices.

In addition, we focus on the privacy of the federated learning algorithm. It is not sufficient that the local data is not explicitly communicated for the algorithm to be private. The model and gradient updates shared by each client may carry information about the data \cite{Hitaj2017,Melis2019,nasr2019comprehensive,Zhu2019}. For example, if we consider a logistic risk function, the gradient can be expressed as a constant multiplying the feature vector. Therefore, there is a need to privatize the federated algorithm in order to stop the information leakage. 

Multiple solutions exist to privatize distributed learning algorithms. They can be split into two frameworks: differential privacy \cite{geyer2017differentially,hu2020personalized,triastcyn2019federated,truex2020ldp,wei2020federated,JayDLDP,LiDLDP,ZhuDPDL,pathak2010DP}, and cryptography \cite{bonawitz2016practical,gascon2017privacy,Mohassel2017SecureML,Niko2013,Zheng2019}. No framework prevails over the other. While differential privacy is easy to implement, it adds a bias to the solution. On the other hand, cryptographic methods such as secure multi-party computation (SMC) are harder to implement and impose hard limitations on the number of participating parties. Thus, in this work we wish to benefit from the two approaches. We develope a protocol based on the works in \cite{vlaski2020graphhomomorphic} and \cite{bonawitz2016practical}. The former utilizes differential privacy to privatize a distributed learning algorithm on a graph. Unlike standard differential privacy schemes, the perturbations are not independent, but instead, they are choosen to satisfy a nullspae condition determined by the graph structure. While, the latter encorporates multiple SMC tools into the federated learning architecture. However, their scheme can be summarized as adding local perturbations, which will be canceled out at the server, to the updates that have been transformed by an invertible function. 

In this work, we first study the effect of privatization on the performance of the learning algorithm. We study general private algorithms whose privatization schemes can be modelled as added noise, whether it be using differential privacy or SMC. We then present the protocol we adopt and specialize the results.

\section{Problem Set Up}
\begin{figure}
	\begin{center}
		\includegraphics[scale=0.3]{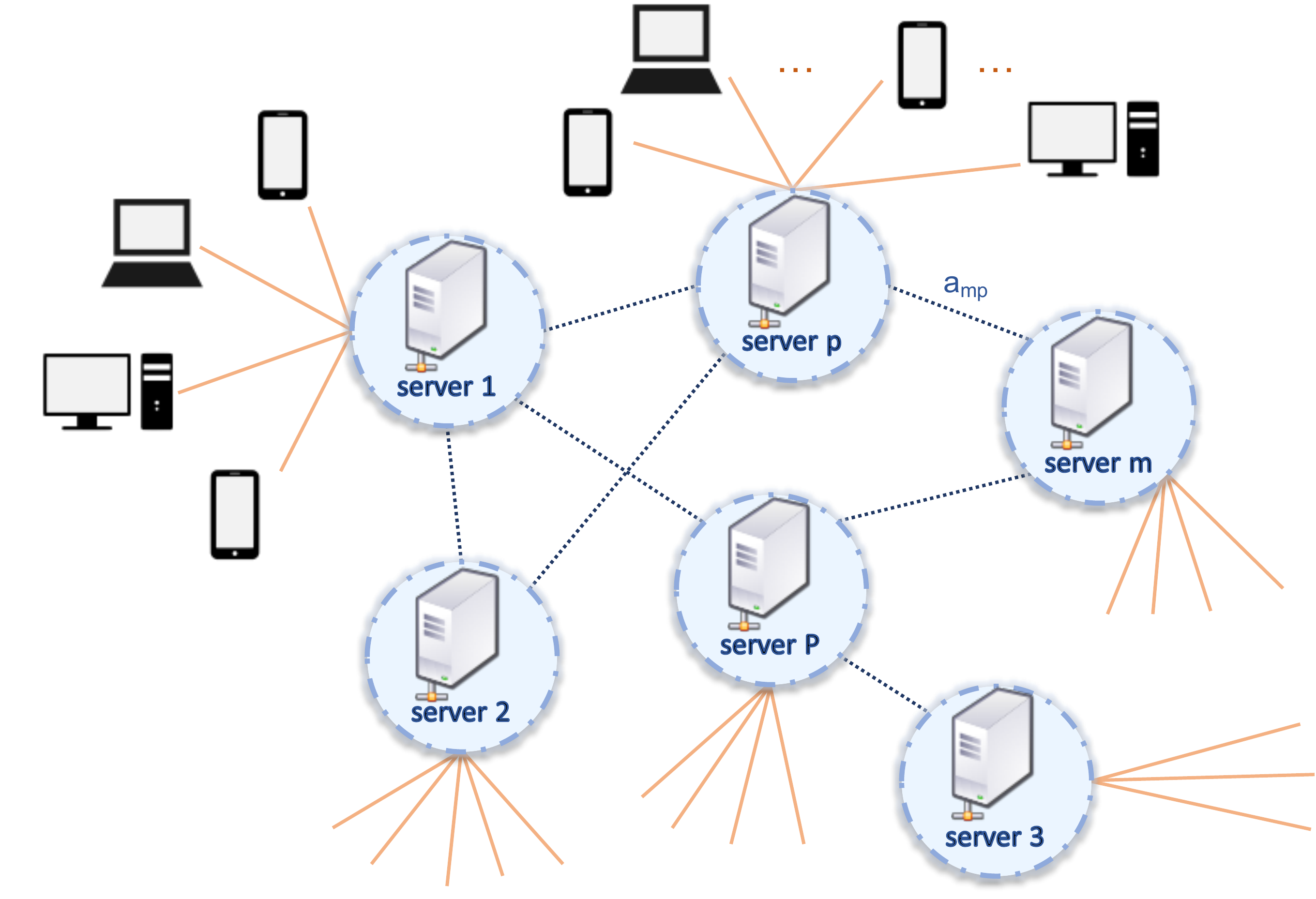}
		\caption{Graph federated architecture.}\label{fig:GFL-arch}
	\end{center}
\end{figure}
The graph federated learning architecture consists of $P$ servers, each connected to a set of $K$ clients, as depicted in Figure \ref{fig:GFL-arch}. The graph connecting the servers is represented by a combination matrix $A \in \mathbb{R}^{P\times P}$ whose elements are denoted by $a_{mp}$. The goal is to minimize the average empirical risk:
\begin{equation}\label{eq:optProb}
	w^{o} \eqdef \argmin_{w\in \mathbb{R}^M} \frac{1}{P}\sum_{p=1}^P \frac{1}{K}\sum_{k=1}^K P_{p,k}(w),
\end{equation} 
where each empirical risk $P_{p,k}(\cdot)$ is defined over a loss function $Q_{p,k}(\cdot;x_{p,k,n})$:
\begin{equation}
	P_{p,k}(w) \eqdef \frac{1}{N_{p,k}} \sum_{n=1}^{N_{p,k}} Q_{p,k}(w;x_{p,k,n}),
\end{equation}
We introduce the subscript $p$ to denote the server, while the subscript $k$ refers to the client and $n$ to the data. To solve problem \eqref{eq:optProb}, each server with its clients runs the federated averaging (FedAvg) algorithm \cite{mcmahan16}, and then the servers amongst themselves run a consensus type algorithm. In our previous works \cite{rizk2020federated} and \cite{rizk2020optimal}, we have shown that when each agent runs a different number of epochs before sending their final update to the server, the resulting incremental error is on the order of $O(\mu^2)$ and is dominated by gradient noise. Thus, to make this work simpler, we assume that the $L$ sampled clients in the set $\Li{p}$ run one stochastic gradient descent (SGD) step during each iteration. More formaly, at iteration $i$, each agent $k \in \Li{p}$ updates the model at the server $\w_{p,i-1}$ to $\w_{p,k,i}$, which they then send to server $p$:
\begin{equation}\label{eq:clientUp}
	\w_{p,k,i} = \w_{p,i-1}  - \mu \frac{1}{B_{p,k}}  \sum_{b\in \mathcal{B}_{p,k,i}} \grad{w}Q_{p,k}(\w_{p,i-1}; \bm{x}_{p,k,b}),
\end{equation}
where $\mathcal{B}_{p,k,i}$ is the mini-batch sampled by client $k$, connected to server $p$, at iteration $i$ and of size $B_{p,k}$. Next, neighbouring servers communicate amongst each other the recieved updates:
\begin{equation}\label{eq:aggr}
	\bm{\psi}_{p,i} = \frac{1}{L} \sum_{k\in \Li{p}} \w_{p,k,i},
\end{equation} 
to finally get:
\begin{equation}\label{eq:serverComb}
	\w_{p,i}  = \sum_{m \in \mathcal{N}_p} a_{mp} \bm{\psi}_{m,i}.
\end{equation}
Next, to introduce privacy to the algorithm, updates sent during each communication round can be perturbed by some noise. Thus, at iteration $i$, let $\bm{g}_{mp,i}$ be the noise added by server $m$ to the update sent to server $p$, and $\bm{g}_{p,k,i}$ be the noise added by agent $k$ to the update sent to server $p$. Then, the algorithm can be described by a client update step \eqref{eq:privClientUp}, a server aggregation step \eqref{eq:privAggr}, and a server combination step \eqref{eq:privComb}.
\begin{align}
	\w_{p,k,i} & = \w_{p,i-1} - \mu \frac{1}{B_{p,k}} \sum_{b\in\mathcal{B}_{p,k,i}} \grad{w}Q_{p,k}(\w_{p,i-1};\bm{x}_{p,k,b}) \label{eq:privClientUp}\\
	\bm{\psi}_{p,i} &= \frac{1}{L}\sum_{k \in \Li{p}} \left( \w_{p,k,i} + \bm{g}_{p,k,i} \right), \label{eq:privAggr} \\
	\w_{p,i} &= \sum_{m \in \mathcal{N}_p} a_{mp} \left( \bm{\psi}_{m,i} + \bm{g}_{mp,i} \right) \label{eq:privComb}
\end{align}
Furthermore, if we assume we are using SMC tools, like secret sharing, we can model the protocol by an invertible function $f(\cdot)$ that maps the local update to an encrypted version. Thus, in the server aggregation \eqref{eq:privAggr} and server combination \eqref{eq:privComb} steps, we replace $\bm{w}_{p,k,i}$ and $\bm{\psi}_{m,i}$ with $f(\bm{w}_{p,k,i})$ and $f(\bm{\psi}_{m,i})$, respectively. For the remainder of the paper, we shall continue with the algorithm formulation in \eqref{eq:privClientUp}-\eqref{eq:privComb} instead of introducing $f(\cdot)$, for ease of notation.
\section{Performance Analysis}
\subsection{Modeling Conditions}
Certain reasonable assumptions on the nature of the graph and the cost functions are made to allow for a tractable convergence analysis. 
\begin{assumption}[Adjacency matrix] \label{ass:adj}
	The adjacency matrix $A$ describing the graph is symmetric and doubly-stochastic, i.e.:
	\begin{equation}
		a_{pm} = a_{mp}, \quad \sum_{m=1}^P a_{mp} = 1.
	\end{equation}
	Furthermore, it is fully connected, satisfying:
	\begin{equation}
		\lambda \eqdef \rho (A - \frac{1}{P}\mathds{1}\mathds{1}^{\tran}) < 1.
	\end{equation}
\qed
\end{assumption}
\begin{assumption}[Convexity and smoothness]\label{ass:conv+smooth}
	The empirical risks $P_{p,k}(\cdot)$ are $\nu-$strongly convex, and the loss functions $Q_{p,k}(\cdot; \cdot)$ are convex, namely:
	\begin{align}
		&P_{p,k}(w_2) \geq P_{p,k}(w_1) + \grad{w}P_{p,k}(w_1)(w_2-w_1) \notag \\
		&\qquad \qquad\quad+ \frac{\nu}{2}\Vert w_2 - w_1\Vert^2,  \label{eq:assStrConv} \\
		&Q_{p,k}(w_2; \cdot) \geq Q_{p,k}(w_1;\cdot) + \grad{w} Q_{p,k}(w_1;\cdot) (w_2-w_1). \label{eq:assConv}
	\end{align} 
	Furthermore, the loss functions have $\delta-$Lipschitz gradients:
	\begin{equation}\label{eq:assLip}
		\Vert \grad{w} Q_{p,k}(w_2;\cdot) - \grad{w} Q_{p,k}(w_1;\cdot) \Vert \leq \delta \Vert w_2 - w_1\Vert.
	\end{equation}
\qed
\end{assumption}
Note that in our previous work \cite{rizk2020federated,rizk2020optimal}, we assumed that the local optimal models, which optimize $P_{p,k}(\cdot)$ at the agents, do not differ too much from the global optimal model at the server. We do not make such an assumption here since we are assuming each agent perfroms one epoch during the agent update step. More explicitly, the bound on the model dissagreement only appears in the incremental error term which we do not have here. If we were to assume that the clients perform mulitple SGD steps in one model update step, then we would need such an assumption to make sure the incremental noise is bounded. However, this assumption is not restrictive, since if the local models differed too much, then collaboration would be nonsensical.

\begin{assumption}[Bounded gradients]\label{ass:bdGrad}
	The norm of the stochastic gradients is bounded along the trajectory of the algorithm:
	\begin{equation}\label{eq:ass-bdGrad}
		\Vert \grad{w}Q_{p,k}(w;\cdot)\Vert \leq B
	\end{equation}
\qed
\end{assumption}
The final condition \eqref{eq:ass-bdGrad} is not an assumption, but is something that can be proved. The bound on the gradient norm is required in the privacy analysis of the algorithm. In general, it is assumed that the gradients are uniformly bounded, and when that does not hold, as in the case of strongly convex cost functions, normalized gradients are used instead. However, we consider the less restrictive condition of bounding the gradients only on the models calculated by the algorithm.

\subsection{Error Recursion}
We focus on the network centroid $\w_{c,i}$ defined by:
\begin{equation}\label{eq:netCentDef}
	\w_{c,i} \eqdef \frac{1}{P}\sum_{p=1}^P \w_{p,i}.
\end{equation}
By combining the three steps of the algorithm, we can get the following recursion for the network centroid:
\begin{align}\label{eq:netCentRec}
	\w_{c,i} 
	=& \w_{c,i-1} - \mu \frac{1}{P}\sum_{p=1}^P \widehat{\grad{w}P_{p}}(\w_{p,i-1}) \notag \\ 
	&+ \frac{1}{P L}\sum_{p=1}^P\sum_{k\in\Li{p}} \bm{g}_{p,k,i} + \frac{1}{P}\sum_{p=1}^P\sum_{m=1}^P a_{mp}\bm{g}_{mp,i},
\end{align}
where we define the stochastic gradient at server $p$ as:
\begin{equation}\label{eq:sgServer}
	\widehat{\grad{w}P_p}(\cdot) \eqdef  \frac{1}{L}\sum_{k \in \Li{p}} \frac{1}{B_{p,k}}\sum_{b\in \mathcal{B}_{p,k,i}}\grad{w}Q_{p,k}(\cdot; \bm{x}_{p,k,b}),
\end{equation}
of the true gradient $\grad{w}P_p(\cdot)$. By defining $\we_{c,i} = w^{o}-\w_{c,i}$ and the gradient noise:
\begin{equation}\label{eq:sgNoise}
	\bm{s}_i = \frac{1}{P}\sum_{p=1}^P \left( \widehat{\grad{w}P_{p}}(\w_{p,i-1}) -  \grad{w}P_p(\w_{p,i-1}) \right),
\end{equation}
  we can write the following error recursion:
\begin{equation}\label{eq:centErrRec}
	\we_{c,i} = \we_{c,i-1} + \mu \frac{1}{P}\sum_{p=1}^P \grad{w}P_p(\w_{p,i-1})  + \mu \bm{s}_i - \bm{g}_{c,i},
\end{equation}
with $\bm{g}_{c,i}$ capturing the total added noise.

\subsection{Generalized Convergence Results}
Before moving to the result on the network convergence, we first introduce the following preliminary lemma. We show that all the models at the servers $\{ \w_{p,i}\}_{p=1}^P$ remain significantly close to the network centroid $\w_{c,i}$.

\begin{lemma}[Network Disagreement] \label{lem:netDis}
	The average deviation from the centroid is bounded during each iteration $i$:
	\begin{align}\label{eq:lemNetDis}
	&	\frac{1}{P} \sum_{p=1}^P \mathbb{E}\Vert \w_{c,i} - \w_{p,i} \Vert^2 \leq 2\mathbb{E}\Vert \we_{c,i-1}\Vert^2 + O(\mu)\sigma_s^2 + O(\sigma_{g_c}^2),
	\end{align}
where $O(\sigma_{g_c}^2)$ is a variance term that depends on the variance of the added noise $\bm{g}_{mp,i}$ and $\bm{g}_{p,k,i}$, and $\sigma_s^2$ is the variance of the gradient noise given by:
\begin{equation}
	\sigma_s^2 \eqdef \frac{2}{P K}\sum_{p=1}^P\sum_{k=1}^K \mathbb{E} \Vert \grad{w}Q_{p,k}(w^o;\bm{x})\Vert^2.
\end{equation}	

\end{lemma}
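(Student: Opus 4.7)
The plan is to stack the $P$ server models into the block vector $\bm{\mathcal{W}}_i \eqdef \col{\w_{p,i}}_{p=1}^P$ and cascade \eqref{eq:privClientUp}--\eqref{eq:privComb} into the single linear recursion $\bm{\mathcal{W}}_i = \mathcal{A}[\bm{\mathcal{W}}_{i-1} - \mu\,\widehat{\nabla\mathcal{P}}(\bm{\mathcal{W}}_{i-1}) + \bm{\Theta}_i] + \bm{\Xi}_i$, with $\mathcal{A}\eqdef A\otimes I_M$, where $\bm{\Theta}_i$ collects the client perturbations $L^{-1}\sum_{k\in\Li{p}}\bm{g}_{p,k,i}$ and $\bm{\Xi}_i$ collects the inter-server perturbations $\sum_{m\in\mathcal{N}_p}a_{mp}\bm{g}_{mp,i}$. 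Writing $\mathcal{J}\eqdef\tfrac{1}{P}\mathds{1}\mathds{1}^{\tran}\otimes I_M$, the quantity in \eqref{eq:lemNetDis} equals $P^{-1}\mathbb{E}\Vert(I-\mathcal{J})\bm{\mathcal{W}}_i\Vert^2$, so I project the recursion with $(I-\mathcal{J})$ and work in that reduced coordinate.

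The consensus structure contracts the homogeneous part. By Assumption~\ref{ass:adj}, $\mathcal{C}\eqdef\mathcal{A}-\mathcal{J}$ satisfies $\Vert\mathcal{C}\Vert\leq\lambda<1$ and $\mathcal{C}(\mathds{1}\otimes v)=0$ for every $v\in\mathbb{R}^M$; projecting therefore yields $(I-\mathcal{J})\bm{\mathcal{W}}_i = \mathcal{C}\bm{\mathcal{W}}_{i-1} - \mu\mathcal{C}\widehat{\nabla\mathcal{P}}(\bm{\mathcal{W}}_{i-1}) + \mathcal{C}\bm{\Theta}_i + (I-\mathcal{J})\bm{\Xi}_i$. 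The centroid error $\we_{c,i-1}$ then enters through the gradient term: using Lipschitzness \eqref{eq:assLip} I write $\widehat{\nabla P_p}(\w_{p,i-1}) = \widehat{\nabla P_p}(w^o) + [\widehat{\nabla P_p}(\w_{p,i-1})-\widehat{\nabla P_p}(w^o)]$ and bound the difference by $\delta\Vert\w_{p,i-1}-w^o\Vert$. Applying the split $\Vert\w_{p,i-1}-w^o\Vert^2 \leq 2\Vert\w_{p,i-1}-\w_{c,i-1}\Vert^2 + 2\Vert\we_{c,i-1}\Vert^2$ then exposes a term proportional to $\mathbb{E}\Vert\we_{c,i-1}\Vert^2$ which, carrying the appropriate $\mu$-dependent coefficient, matches the $2\mathbb{E}\Vert\we_{c,i-1}\Vert^2$ summand in \eqref{eq:lemNetDis}.

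The other pieces fall out from Assumption~\ref{ass:bdGrad} and the zero-mean independence of the privacy noise. The term $\mathbb{E}\Vert\widehat{\nabla P_p}(w^o)\Vert^2$ averages to the gradient-noise variance $\sigma_s^2$ through its definition in \eqref{eq:sgNoise}, so the $\mu^2$ prefactor on the gradient contribution delivers the $O(\mu)\sigma_s^2$ slack once one power of $\mu$ is absorbed into the Young-inequality coefficients. For the privacy perturbations, zero mean and independence make the cross-terms vanish in expectation and reduce $\mathbb{E}\Vert\mathcal{C}\bm{\Theta}_i+(I-\mathcal{J})\bm{\Xi}_i\Vert^2$ to a sum of individual variances, which I collect into the single $O(\sigma_{g_c}^2)$ term.

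The step I expect to require the most care is the residual disagreement $\Vert\w_{p,i-1}-\w_{c,i-1}\Vert^2$ produced by the Lipschitz split. Since $\mathcal{C}\mathds{1}=0$ prevents any global mean of $\bm{\mathcal{W}}_{i-1}$ from surviving the projection, the centered state $\bm{\mathcal{W}}_{i-1}-\mathds{1}\otimes w^o$ collapses onto its own disagreement after pre-multiplication by $\mathcal{C}$, so I will need the coefficient of this residual to be strictly below unity --- via $\lambda<1$ and a careful Young split --- so that it can be iterated away or absorbed into the $O(\mu)$ slack. Pinning down the exact constant $2$ as stated is where the most delicate bookkeeping lives.
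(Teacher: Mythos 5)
The paper omits its own proof of this lemma, so I can only judge your proposal against what the stated bound requires. Your setup is the right one and is surely what the authors intend: stacking into $\bm{\mathcal{W}}_i$, projecting with $I-\mathcal{J}$, using $\Vert \mathcal{A}-\mathcal{J}\Vert=\lambda<1$ from Assumption~\ref{ass:adj}, splitting the stochastic gradient around $w^o$ via \eqref{eq:assLip} to produce the $\sigma_s^2$ term, and killing the noise cross-terms by zero-mean independence. Those parts are fine.

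The gap is in how you produce the right-hand side of \eqref{eq:lemNetDis}, and you have in fact flagged the weak spot yourself without resolving it. First, you attribute the $2\mathbb{E}\Vert\we_{c,i-1}\Vert^2$ summand to the gradient term after the Lipschitz split; but that route gives it a coefficient of order $\mu^2\delta^2$ (or $O(\mu)$ after a Young split), not the constant $2$. The natural source of a constant-order coefficient is the \emph{linear} term $\mathcal{C}\bm{\mathcal{W}}_{i-1}$: centering it at $\mathds{1}\otimes w^o$ (legitimate since $\mathcal{C}(\mathds{1}\otimes w^o)=0$) and using $\Vert \w_{p,i-1}-w^o\Vert^2\le 2\Vert\we_{c,i-1}\Vert^2+2\Vert\w_{p,i-1}-\w_{c,i-1}\Vert^2$ yields $2\lambda^2\mathbb{E}\Vert\we_{c,i-1}\Vert^2$ plus a previous-disagreement term. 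You instead collapse $\mathcal{C}\bm{\mathcal{W}}_{i-1}$ onto the disagreement via $\mathcal{C}\mathcal{J}=0$, which removes any constant-order $\we_{c,i-1}$ contribution and leaves you only with $\lambda^2 D_{i-1}$ where $D_{i-1}$ denotes the previous disagreement. Second, and more importantly, neither of your proposed remedies for $D_{i-1}$ closes the argument in the stated form: it is not $O(\mu)$, so it cannot be ``absorbed into the $O(\mu)$ slack,'' and iterating the recursion $D_i\le \alpha D_{i-1}+c\,\mathbb{E}\Vert\we_{c,i-1}\Vert^2+\beta$ with $\alpha<1$ produces a geometric sum over the \emph{entire history} $\{\mathbb{E}\Vert\we_{c,j}\Vert^2\}_{j<i}$, not the single term $\mathbb{E}\Vert\we_{c,i-1}\Vert^2$; converting that sum into $2\mathbb{E}\Vert\we_{c,i-1}\Vert^2$ needs an additional uniformity or monotonicity argument on the centroid error that you do not supply. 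Until you either exhibit a one-step bound that avoids $D_{i-1}$ entirely or justify the reduction of the history sum to the immediately preceding centroid error, the proof does not reach the inequality as stated.
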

\begin{proof}
	Proof omitted due to space limitations
\end{proof}
We observe that the added noise contributes an added $O(\sigma_{g_c}^2)$ to the bound, which does not exist in the non-private algorithm's bound. Furthermore, the bound is in terms of the centroid error $\we_{c,i-1}$. As seen in the main theorem below, that term converges to a neighbourhood around zero. 

 \begin{theorem}[Convergence of MSE] \label{thrm:conv}
	Under asumptions \ref{ass:adj} and \ref{ass:conv+smooth}, the network centroid converges to the optimal point $w^{o}$ exponentially fast for a sufficientlt small step size $\mu$ :
	\begin{equation}\label{eq:thrmMSE}
		\mathbb{E}\Vert \we_{c,i}\Vert^2 \leq  O(\mu)\sigma_s^2 + O(\mu + \mu^{-1})\sigma_{g_c}^2 + O(\mu^3).
	\end{equation}
\end{theorem}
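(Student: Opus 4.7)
The plan is to rewrite the centroid error recursion \eqref{eq:centErrRec} as a perturbed linear recursion whose deterministic part is contractive, then bound the perturbation using Lemma \ref{lem:netDis} together with the noise variances, and iterate to a steady-state bound. As a first step I would introduce the aggregate cost $P(w) \eqdef \frac{1}{P}\sum_{p=1}^P P_p(w)$, which inherits $\nu$-strong convexity and a $\delta$-Lipschitz gradient from Assumption \ref{ass:conv+smooth}, and split the drift term as
\begin{equation*}
\frac{1}{P}\sum_{p=1}^P \grad{w}P_p(\w_{p,i-1}) = \grad{w}P(\w_{c,i-1}) + \bm{e}_{i-1},
\end{equation*}
where $\bm{e}_{i-1} \eqdef \frac{1}{P}\sum_p \left[\grad{w}P_p(\w_{p,i-1}) - \grad{w}P_p(\w_{c,i-1})\right]$ collects the per-server gradient deviations. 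Since $\grad{w}P(w^{o})=0$, an integral mean-value argument rewrites $\grad{w}P(\w_{c,i-1}) = -\bm{H}_{i-1}\we_{c,i-1}$ with $\nu I \preceq \bm{H}_{i-1} \preceq \delta I$, so \eqref{eq:centErrRec} becomes
\begin{equation*}
\we_{c,i} = (I - \mu\bm{H}_{i-1})\we_{c,i-1} + \mu\bm{e}_{i-1} + \mu\bm{s}_i - \bm{g}_{c,i}.
\end{equation*}

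Next I would take squared norms and expectations. Using the zero-mean conditional property of the gradient noise $\bm{s}_i$ and of the privacy noises $\bm{g}_{p,k,i},\bm{g}_{mp,i}$ (independent of the past models), together with the bound $\Vert I - \mu\bm{H}_{i-1}\Vert \leq 1-\mu\nu$ for $\mu\leq 1/\delta$ and a Jensen split with weight $1-\mu\nu$, one obtains
\begin{equation*}
\mathbb{E}\Vert\we_{c,i}\Vert^2 \leq (1-\mu\nu)\mathbb{E}\Vert\we_{c,i-1}\Vert^2 + \frac{\mu}{\nu}\mathbb{E}\Vert\bm{e}_{i-1}\Vert^2 + \mu^2\sigma_s^2 + \sigma_{g_c}^2.
\end{equation*}
Lipschitz continuity then gives $\mathbb{E}\Vert\bm{e}_{i-1}\Vert^2 \leq \delta^2 \cdot \frac{1}{P}\sum_p \mathbb{E}\Vert\w_{p,i-1}-\w_{c,i-1}\Vert^2$, and Lemma \ref{lem:netDis} with shifted indices replaces the disagreement by $2\mathbb{E}\Vert\we_{c,i-2}\Vert^2 + O(\mu)\sigma_s^2 + O(\sigma_{g_c}^2)$. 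The result is a two-step linear recursion whose effective coefficient on past errors is $1-\mu\nu + O(\mu^2)$, which is strictly less than one for $\mu$ small enough. Iterating this Schur-stable system and solving for the steady state yields the claimed $O(\mu)\sigma_s^2 + O(\mu+\mu^{-1})\sigma_{g_c}^2 + O(\mu^3)$ bound, where the $\mu^{-1}$ factor on $\sigma_{g_c}^2$ arises from dividing the $\mu$-free additive noise $\sigma_{g_c}^2$ by $\mu\nu$ at the fixed point, while the extra $O(\mu)\sigma_{g_c}^2$ piece comes from the noise routed through Lemma \ref{lem:netDis}.

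The main obstacle will be closing the feedback loop between the centroid recursion and Lemma \ref{lem:netDis}, since the latter is itself stated in terms of a past centroid error. The two bounds must be combined into a single coupled linear system that remains stable for small $\mu$, and the constants must be tracked tightly enough that the $O(\sigma_{g_c}^2)$ constant inside Lemma \ref{lem:netDis} collapses into the $O(\mu)\sigma_{g_c}^2$ half of the statement (rather than a worse scaling) after being multiplied by $\mu/\nu$ from the Jensen split, and that the leftover higher-order interactions between gradient noise and privacy noise fit into the $O(\mu^3)$ slack.
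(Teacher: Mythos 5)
The paper omits its own proof, so I can only judge your argument on its merits; your overall route (a contractive centroid recursion coupled with Lemma~\ref{lem:netDis} to control the per-server gradient deviations) is the natural one and is evidently what Lemma~\ref{lem:netDis} is set up for. However, there is a genuine gap where you close the feedback loop. After your Jensen split the deviation term enters with weight $\mu/\nu$, and substituting Lemma~\ref{lem:netDis} (shifted by one index) produces the term $\tfrac{2\mu\delta^2}{\nu}\,\mathbb{E}\Vert\we_{c,i-2}\Vert^2$. This is $O(\mu)$, not $O(\mu^2)$ as you assert, and its constant cannot be made small: the total weight on past errors in your two-step recursion is
\begin{equation*}
(1-\mu\nu) + \frac{2\mu\delta^2}{\nu} \;=\; 1 + \mu\,\frac{2\delta^2-\nu^2}{\nu} \;>\; 1,
\end{equation*}
since $\delta \geq \nu$ always. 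No choice of the free parameter in the Young/Jensen split repairs this: to keep a $1-O(\mu)$ contraction on $\mathbb{E}\Vert\we_{c,i-1}\Vert^2$ you must pay at least $\Omega(\mu/\nu)$ on $\mathbb{E}\Vert\bm{e}_{i-1}\Vert^2$, and the absolute constant $2$ in Lemma~\ref{lem:netDis} then forces a coefficient of order $\mu\delta^2/\nu \geq \mu\nu$ on the two-step-back error. The coupled system is therefore not Schur-stable for any $\mu>0$, and the ``iterate to steady state'' step fails; this is not a matter of tracking constants tightly, as your closing paragraph suggests.

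To repair the argument you need a disagreement bound in which the dependence on the past centroid error is itself damped by a factor vanishing in $\mu$ --- e.g.\ $O\big(\mu^2/(1-\lambda)^2\big)\,\mathbb{E}\Vert\we_{c,i-1}\Vert^2$, which is what one expects since the per-server deviations are driven by the $O(\mu)$ drift and contracted by the mixing rate $\lambda<1$ of Assumption~\ref{ass:adj} --- or you must avoid recentring the aggregate gradient at $\w_{c,i-1}$ and instead exploit strong convexity jointly in the individual iterates $\w_{p,i-1}$. With an $O(\mu^2)$-damped disagreement bound the coefficient on $\mathbb{E}\Vert\we_{c,i-2}\Vert^2$ becomes $O(\mu^3)$, the recursion is stable for small $\mu$, and dividing the additive terms by $\mu\nu$ at the fixed point yields exactly the claimed $O(\mu)\sigma_s^2 + O(\mu+\mu^{-1})\sigma_{g_c}^2 + O(\mu^3)$. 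As written, your proposal does not get there.
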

\begin{proof}
Proof omitted due to space limitations.	
\end{proof}
Thus, a close examination of the above theorem reveals that all privatized algorithms that can be modelled by added noise, add a noise variance term scaled by $O(\mu + \mu^{-1})$. The $O(\mu^{-1})$ term comes from the noise added at the client level to the updates sent to the server, while the $O(\mu)$ term comes from the network disagreement between the models at the server and the centroid model. The result does not come as a surprise, since it quantifies the trade-off between privacy and accuracy.

\subsection{Performance of the hybrid scheme}
We now specialize the above results to the scheme adopted in this work. The protocol developed in \cite{bonawitz2016practical} utilizes a secret sharing method to insure that the messages sent by the clients arrive to the server encoded. The method is equivalent to applying a mask to the updates by each client, which cancels out at the server, i.e., at every server $p$ the following holds:
\begin{equation}
	\sum_{k \in \Li{p}} \bm{g}_{p,k,i} = 0.
\end{equation}
Furthermore, we apply graph homomorphic perturbations, introduced in \cite{vlaski2020graphhomomorphic}: Let each server $p$ sample independently from the Laplace distribution $\bm{g_{p,i}} \sim Lap(0,\sigma_g/\sqrt{2})$ with variance $\sigma_g^2$. Then, the noise sent among servers can be constructed as:
\begin{equation}
	\bm{g}_{mp,i} = \begin{cases}
		\bm{g}_{m,i}, & \text{if } m \neq p, \\
		- \frac{1-a_{mm}}{a_{mm}}\bm{g}_{m,i}, & \text{if } m = p.
	\end{cases}
\end{equation}
Thus, the following result holds:
\begin{equation}
	\frac{1}{P}\sum_{p=1}^P \sum_{m=1}^P a_{mp}\bm{g}_{mp,i} = 0.
\end{equation}
Therefore, with this scheme, the centroid model recursion \eqref{eq:netCentRec} has no noise component. This implies that the $O(\mu^{-1})$ term disappears from the bound of the MSE \eqref{eq:thrmMSE}. Eventhough the effect of the noise added by the servers remains, it is scaled by $\mu$ in the MSE bound.

\section{Privacy Analysis}
We focus on the privacy of the hybrid scheme described in the previous section. We quantify privacy using differential privacy \cite{dwork2014algorithmic}. Thus, we first need to find the sensitivity of the graph FedAvg algorithm, since it is used to callibarate the perturbations. To do so, consider, without loss of generality, that client 1 connected to server 1 decided not to participate, and instead its data $\bm{x}_{1,1}$ was replaced by some other data $\bm{x}'$ with a different distribution. Then, the algorithm will follow a different trajectory $\w_{p,k,i}'$. The sensitivity of the function is thus given by:
\begin{equation}\label{eq:sensitivity}
		\Delta (i) \eqdef \max_{(p,k)} \Vert \w_{p,k,i} - \w_{p,k,i}'\Vert \leq 2\mu B i
\end{equation}

Next, we wish to show that the algorithm is differentially private, but before doing so we present the definition of $\epsilon(i)-$differential privacy.

\begin{definition}[$\epsilon(i)-$Differential Privacy]
	We say that the algorithm given in \eqref{eq:privClientUp}-\eqref{eq:privComb} is $\epsilon(i)-$differentially private for server $p$ at time $i$ if the following condition holds:
	\begin{equation}\label{eq:def-epsDP}
		\frac{\mathbb{P}\left( \left\{ \left\{ \bm{\psi}_{p,j} + \bm{g}_{mp,j} \right\}_{m \in \mathcal{N}_p\char`\\ \{p\}} \right\}_{j=0}^i\right)}{\mathbb{P}\left( \left\{ \left\{ \bm{\psi^{\prime}}_{p,j} + \bm{g}_{mp,j} \right\}_{m \in \mathcal{N}_p\char`\\ \{p\}} \right\}_{j=0}^i\right)}
		 \leq e^{\epsilon(i)} 		
	 \end{equation}
 \qed
\end{definition}
\noindent
The above definition states that the probability of any trajectory is comparable whether or not a client shares its data. Furthermore, it will be our goal to have a small $\epsilon(i)$ to get a higher privacy guarantee.

\begin{theorem}[Privacy of GFL Algorithm]
	If the algorithm \eqref{eq:privClientUp}-\eqref{eq:privComb} adopts the hybrid privacy scheme described in the previous section, then it is $\epsilon(i)-$differentially private, at time $i$ for a standard deviation of $\sigma_g = \sqrt{2}\mu B(1+i)i/\epsilon(i)$ .
\end{theorem}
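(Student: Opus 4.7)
The plan is to treat the outputs emitted by server $p$ at each round as a release of the standard Laplace mechanism with sensitivity supplied by \eqref{eq:sensitivity}, and then glue the $i$ rounds together by basic sequential composition of pure $\epsilon$-DP.

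First I would collapse the joint observable in \eqref{eq:def-epsDP} to a single noisy vector per round. Because the secret-shared client perturbations satisfy $\sum_{k\in\Li{p}}\bm{g}_{p,k,j}=0$, the aggregate $\bm{\psi}_{p,j}=\tfrac{1}{L}\sum_{k\in\Li{p}}\bm{w}_{p,k,j}$ carries no randomness that is exposed outside server $p$. In addition, the graph-homomorphic construction assigns the same draw $\bm{g}_{p,j}\sim\mathrm{Lap}(0,\sigma_g/\sqrt{2})$ to every outgoing edge (up to the self-loop scaling), so the family $\{\bm{\psi}_{p,j}+\bm{g}_{mp,j}\}_m$ at iteration $j$ is a deterministic function of the single vector $\bm{\psi}_{p,j}+\bm{g}_{p,j}$. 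The ratio in \eqref{eq:def-epsDP} therefore reduces to a ratio of densities of the sequence $\{\bm{\psi}_{p,j}+\bm{g}_{p,j}\}_{j=0}^{i}$.

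Next I would lift the client-level sensitivity \eqref{eq:sensitivity} to the aggregate level: since $\Vert\bm{w}_{p,k,j}-\bm{w}_{p,k,j}'\Vert\leq 2\mu Bj$ holds for every $(p,k)$, the triangle inequality gives $\Vert\bm{\psi}_{p,j}-\bm{\psi}_{p,j}'\Vert\leq 2\mu Bj$. A coordinate-wise Laplace-density ratio together with the reverse triangle inequality on the exponent then produces a per-round guarantee $\epsilon_j = 2\sqrt{2}\mu Bj/\sigma_g$. Because the $\bm{g}_{p,j}$ are independent across $j$, the joint density factorizes and sequential composition of pure $\epsilon$-DP yields $\epsilon(i)\leq\sum_{j=1}^{i}\epsilon_j$; summing the arithmetic series gives $\epsilon(i)=\sqrt{2}\mu B\,i(i+1)/\sigma_g$, and solving for $\sigma_g$ recovers the stated $\sigma_g=\sqrt{2}\mu B(1+i)i/\epsilon(i)$.

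The step I expect to require the most care is the first one: ensuring that collapsing the correlated per-edge perturbations $\{\bm{g}_{mp,j}\}_m$ to the single $\bm{g}_{p,j}$ does not discard information available to an adversary who simultaneously observes several of $p$'s neighbours. One needs to check that the map from $\bm{g}_{p,j}$ to the multi-edge observation is deterministic and invertible, so no extra leakage is introduced. A secondary subtlety is that the sensitivity bound \eqref{eq:sensitivity} grows linearly in $j$ because a single data swap at client $(1,1)$ propagates through the doubly-stochastic consensus at every round; verifying that this propagation does not silently tighten the constant $2\mu Bj$ relies on Assumptions \ref{ass:adj} and \ref{ass:bdGrad} and is a routine induction on $j$. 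Once these two points are clean, the remaining arithmetic is essentially mechanical.
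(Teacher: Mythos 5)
The paper omits its own proof, but your argument is evidently the intended one and it reproduces the stated constant exactly: under the secret-sharing cancellation and the graph-homomorphic construction the whole round-$j$ release collapses to the single vector $\bm{\psi}_{p,j}+\bm{g}_{p,j}$, the Laplace mechanism with scale $\sigma_g/\sqrt{2}$ and sensitivity $2\mu Bj$ gives $\epsilon_j=2\sqrt{2}\mu Bj/\sigma_g$, and sequential composition over $j=1,\dots,i$ yields $\epsilon(i)=\sqrt{2}\mu B i(i+1)/\sigma_g$, which inverts to the claimed $\sigma_g$. The one point worth making explicit is that the coordinate-wise Laplace ratio is governed by the $\ell_1$ distance between $\bm{\psi}_{p,j}$ and $\bm{\psi}_{p,j}'$, so the sensitivity bound \eqref{eq:sensitivity} must be interpreted in the $\ell_1$ norm (or a dimension factor absorbed into $B$) — an imprecision shared with the paper's own statement rather than a flaw in your argument.
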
 
\begin{proof}
	Proof omitted due to space limitations.
\end{proof}
Thus, if we wish to keep the privacy high as more iterations are performed, then the variance of the added noise ought to be increased. This clearly decreases the model utility, as seen in Theorem \ref{thrm:conv}. Another way of interpreting the privacy theorem, as follow: If we keep $\sigma_g$ fixed, then $\epsilon(i) = \sqrt{2}\mu B(1+i)i/\sigma_g = O(i^2)$, which increases as more iterations are performed. Thus, the privacy decreases quadratically with time. This does not come as a surprise, since the longer the algorithm runs, the more information across servers and clients is difussed.

\section{Experimental Results}
To illustrate the theoretical results numericaly, we simulate a GFL consisting of $P = 10 $ servers, each with $K = 50$ clients, whose goal is to solve a logistic regression binary problem. We generate a set of data points $\{\gamma_{p,k}(n), h_{p,k,n}\}_{n=1}^{100}$ for each client, where $\gamma_{p,k}(n) = \pm 1$, and $h_{p,k,n} \in \mathbb{R}^M$ with $f(h_{p,k,n} |\gamma_{p,k}(n) = \gamma) = \mathcal{N}(\gamma;\sigma_{h,p,k}^2)$. We compare our private scheme with a standard private algorithm that uses standard perturbations, and with the non-private algorithm. The results are found in Figure \ref{fig:expRes}. We observe that our hybrid scheme does a very good job at approximating the non-private scheme. We also increase the noise variance, and we observe that while the IID private scheme does not converge, our scheme continues to perform as well as the non-private one.
\begin{figure}
	\begin{center}
		\includegraphics[scale=0.6]{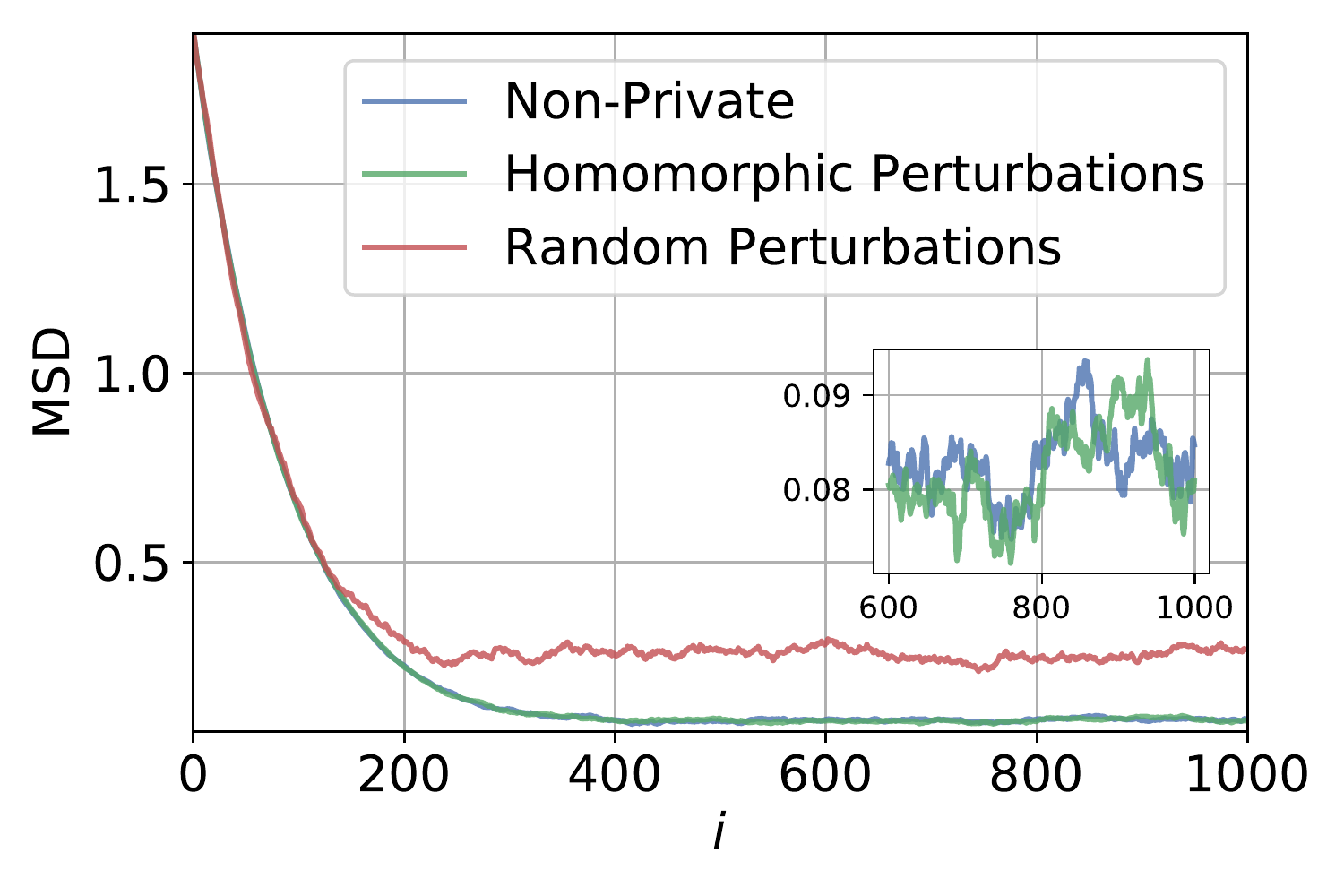}
		\caption{Performance plots with $M = 2$, $\mu=0.1$, $\rho=0.01$, $\sigma_g = 0.2$}\label{fig:expRes}
	\end{center}
\end{figure}

\section{Conclusion}
In this work, we extended the federated learning architecture to GFL. We privatized the algorithm by using non-random perturbations. We quantified the privacy of our algorithm using differential privacy and  provided a performance analysis. Both the theoretical and experimental results showed that non-random perturbations reduce the negative effect of added noise to the model utility. 

\appendices

\bibliographystyle{IEEEtran}
{\balance{\bibliography{refs}}}

\end{document}